\def\E{\mathbb{E}}
\def\P{\mathbb{P}}
\newtheorem{theorem}{Theorem}
\newtheorem{proposition}{Proposition}
\title{Tight Variational Bounds via Random Projections and I-Projections}
\author{
Lun-Kai Hsu, Tudor Achim, Stefano Ermon \\
Department of Computer Science\\
Stanford University\\
Stanford, CA 94305 \\
\texttt{\{luffykai,tachim,ermon\}@cs.stanford.edu} \\
}
\newcommand\blfootnote[1]{%
  \begingroup
  \renewcommand\thefootnote{}\footnote{#1}%
  \addtocounter{footnote}{-1}%
  \endgroup
}
\begin{document}
\blfootnote{LKH and TA contributed equally to this paper. }

\maketitle

\begin{abstract}
Information projections are the key building block of variational inference algorithms and are used to approximate a target probabilistic model by projecting it onto a family of tractable distributions. In general, there is no guarantee on the quality of the approximation obtained. To overcome this issue, we introduce a new class of random projections to reduce the dimensionality and hence the complexity of the original model. In the spirit of random projections, the projection preserves (with high probability) key properties of the target distribution.
We show that information projections can be combined with random projections to obtain provable
guarantees on the quality of the approximation obtained, regardless of the complexity of the
original model. We demonstrate empirically that augmenting mean field with a random projection step
dramatically improves partition function and marginal probability estimates, both on synthetic and real world data.
\end{abstract}

\section{Introduction}
\label{introduction}
Probabilistic inference is a core problem in machine learning, physics, and statistics~\cite{koller2009probabilistic}. Probabilistic inference methods are needed for training, evaluating, and making predictions with probabilistic models~\cite{murphy2012machine}. 
Developing scalable and accurate inference techniques is the key computational bottleneck towards
deploying large-scale statistical models, but exact inference is known to be computationally intractable. The root cause is the curse of dimensionality -- the number of possible scenarios to consider grows exponentially in the number of variables, and in continuous domains, the volume grows exponentially in the number of dimensions~\cite{bellman1961adaptive}. Approximate techniques are therefore almost always used in practice~\cite{murphy2012machine}. 

Sampling-based techniques and variational approaches are the two main paradigms for approximate inference~\cite{andrieu2003introduction,wainwright2008graphical}. 
Sampling-based approaches attempt to approximate intractable, high-dimensional distributions using a (small) collection of representative samples~\cite{gogate2011samplesearch,ermon2013embed,maddison2014sampling}. 
Unfortunately, it is usually no easier to obtain such samples than it is to solve the original probabilistic inference problem~\cite{jerrum1997markov}. 
Variational approaches, on the other hand, approximate an intractable distribution using a family of tractable ones. 
Finding the best approximation, also known as computing an \emph{I-Projection} onto the family, is the key ingredient in all variational inference algorithms. 
In general, there is no guarantee on the quality of the approximation obtained. Intuitively, if the target model is too complex with respect to the family used, then the approximation will be poor. 

To overcome this issue, we introduce a new class of \emph{random} projections~\cite{vadhan2011pseudorandomness,goldreich2001randomized,uai13LPCount}. 
These projections take as input a probabilistic model and randomly perturb it, reducing its degrees of freedom. 
The projections can be computed efficiently and they reduce the effective dimensionality and complexity of the target model. 
Our key result is that the randomly projected model can then be approximated with I-projections onto simple families 
of distributions such as mean field with provable guarantees on the accuracy, regardless of the complexity of the original model.
Crucially, in the spirit of random projections for dimensionality reduction, the random projections affect key properties of the target distribution (such as the partition function) in a highly predictable way. 
The I-projection of the projected model can therefore be used to accurately recover properties of the original model with high probability.

We demonstrate the effectiveness of our approach by using mean field augmented
with random projections to estimate marginals and the log partition function 
on models of synthetic and real-world data, empirically showing large 
improvements on both tasks.

\section{Preliminaries}
\label{preliminaries}
Let $p(x) = \frac{1}{Z} \prod_\alpha \psi_\alpha(\{x\}_\alpha)$ be a probability distribution over $n$ binary variables $x \in \{0,1\}^n$ specified by an undirected graphical model
\footnote{ We restrict ourselves to binary variables for the ease of exposition. 
Our approach applies more generally to discrete graphical models.}
~\cite{koller2009probabilistic}. We further assume that $p(x)$ is a member of an exponential family of distributions parameterized by
$\theta \in \mathbb{R}^d$ and with sufficient statistics 
$\phi(x)$~\cite{wainwright2008graphical}, i.e., 
$p(x) = \frac{\exp(\theta\cdot \phi(x))}{Z}$.
The constant 
$Z = \sum_{x \in \{0,1\}^n} \prod_{\alpha \in \mathcal{I}} 
    \psi_\alpha(\{x\}_\alpha)$
is known as the partition function and ensures that the 
probability distribution is properly normalized. 
Computing the partition function is one of the key computational challenges 
in probabilistic reasoning and statistical machine learning, as it is needed to 
evaluate likelihoods and compare competing models of data. 
This computation is known to be intractable (\#-P hard) in the worst-case, 
as the sum is defined over an exponentially large number of 
terms~\cite{valiant1979complexity,koller2009probabilistic}. 
We focus on variational approaches for approximating the partition function.

\subsection{Variational Inference and I-projections}
The key idea of variational inference approaches is to approximate the intractable 
probability distribution $p(x)$ with one that is more tractable. The approach is 
to define a family $\mathcal{Q}$ of tractable distributions and then look for a 
distribution in this family that minimizes a notion of divergence from $p$. 
Typically, the Kullback-Leibler divergence $D_{KL} (q||p)$ is used, which is 
defined as follows
\begin{equation}
\label{kldiv}
D_{KL} (q||p)  = \sum_x q(x) \log \frac{q(x)}{p(x)} = \sum_x q(x) \log q(x) - \theta \cdot \sum_x q(x) \phi(x) + \log Z 
\end{equation}
A distribution $q^\ast \in \mathcal{Q}$ that minimizes this divergence, $q^\ast = \arg \min_{q \in Q} D_{KL} (q||p)$, 
is called an information projection (\emph{I-projection}) onto $\mathcal{Q}$. Intuitively, $q^\ast$ is the ``closest'' 
distribution to $p$ among all the distributions in $\mathcal{Q}$. Typically, one chooses $\mathcal{Q}$ to be a 
family of tractable distributions for which inference is tractable, i.e., such that (\ref{kldiv}) can be evaluated efficiently. 
The simplest choice, which removes all conditional dependencies, is to let $\mathcal{Q}$ be the set of fully 
factored probability distributions over $\mathcal{X}$, namely $\mathcal{Q}_{MF} = \{q(x)| q(x) =
\prod_i q_i(x_i)\}$. This is known as the mean field approximation.
Even when $\mathcal{Q}$ is tractable, computing an I-projection, i.e., \emph{minimizing} the KL divergence, is a 
non-convex optimization problem which can be difficult to solve.

Since the KL-divergence is non-negative, equation (\ref{kldiv}) shows that any distribution $q \in \mathcal{Q}$ 
provides a lower bound on the value of the partition function
\begin{equation}
\label{varLB}
\log Z \geq \max_{q \in \mathcal{Q}} - \sum_x q(x) \log q(x) + \theta \cdot \sum_x q(x) \phi(x)
\end{equation}

The distribution $q^\ast$ that minimizes $D_{KL} (q||p)$ is also the distribution that provides the tightest lower 
bound on the partition function by maximizing the RHS of equation (\ref{varLB}). The larger the set 
$\mathcal{Q}$ is, the better $q^\ast$ can approximate $p$ and the tighter the bound becomes. 
If $\mathcal{Q}$ is so large that $p \in \mathcal{Q}$, then $\min_{q \in Q} D_{KL} (q||p) =0 $, 
because when $q^\ast=p$, $D_{KL} (q^\ast||p)= 0$. In general, however, there is no guarantee on 
the tightness of bound (\ref{varLB}) even if the optimization can be solved exactly.

\subsection{Random Projections}
We introduce a different class of \emph{random} projections that we will use for probabilistic inference. 
Let $\mathcal{P}$ be the set of all probability distributions over $\{0,1\}^n$. We introduce a 
family of operators $\mathcal{R}^m_{A,b} : \mathcal{P} \rightarrow \mathcal{P}$, where 
$m \in [0,n]$, $A \in \{0,1\}^{m \times n}$, and $b\in \{0,1\}^m$. 
$R^m_{A,b} \in \mathcal{R}$ maps 
$p(x) = \frac{1}{Z} \prod_\alpha \psi_\alpha(\{x\}_\alpha)$ to a new 
probability distribution $R^m_{A,b}(p)$ restricted to $\left\{ x : Ax=b\mod 2 \right\}$ 
whose probability mass function is proportional to $p$. 
Formally,
\[
R^m_{A,b}(p) (x) = \frac{1}{Z(A,b)}\prod_\alpha \psi_\alpha(\{x\}_\alpha)
\]
In other words, for all $x \in \{x| A x = b \mod 2 \}$ $R^m_{A,b}(p)$ is proportional to the original $p(x)$, and the new normalization constant is
\[
Z(A,b) = \sum_{x | A x = b \bmod 2} \prod_\alpha \psi_\alpha(\{x\}_\alpha)
\]
These operators are clearly idempotent and can thus be interpreted as projections. 

When the parameters $A,b$ are chosen randomly, the operator $R^m_{A,b}$ can be seen as a random projection. 
We consider random projections obtained by choosing $A \in \{0,1\}^{m \times n}$ and $b\in \{0,1\}^m$ 
independently and uniformly at random, i.e., choosing each entry by sampling an independent unbiased Bernoulli random variable. This can be shown to implement a strongly universal hash 
function~\cite{vadhan2011pseudorandomness,goldreich2001randomized}. Intuitively, the projection 
randomly subsamples the original space, selecting configurations $x\in \{0,1\}^n$ pairwise independently with probability $2^{-m}$. 
It can be shown~\cite{uai13LPCount,ermon2014low} that
\[
\E[Z(A,b)] = 2^{-m} Z
\]
where the expectation is over the random choices of $A,b$, and that
$
    Var\left[Z(A,b)\right] = 
        \frac{1}{2^m}\left( 1-\frac{1}{2^m} \right)
        \sum_x \left( \prod_\alpha \psi_\alpha(\{x\}_\alpha) \right)^2
$. As we will formalize later, this random projection simplifies the 
model without losing too much information because it affects the partition function in a highly predictable way (knowing the expectation and the variance is sufficient to achieve high probability bounds). 

To gain some intuition on the effect of the random projection, we can 
rewrite the linear system $A x = b \bmod 2$ 
in reduced row-echelon form~\cite{uai13LPCount}. Assuming $A$ is full-rank, 
we have that $C = [ I_m | A']$ where $I_m$ is the $m \times m$ identity matrix. The system $A x = b$ is mathematically equivalent to $C x = b'$. For notational simplicity, we continue to use $b$ instead of $b'$. 
We can equivalently rewrite the constraints $A x = b \bmod 2$ as the following set of constraints
\begin{align*}
x_1 = \bigoplus_{i=m+1}^n c_{1i} x_i \oplus b_1,  \cdots, x_m = \bigoplus_{i=m+1}^n c_{mi} x_i \oplus b_m \\
\end{align*}
where $\oplus$ denotes the exclusive-or (XOR) operator. 
Thus, the random projection reduces the 
degrees of freedom of the model by $m$, as the first $m$ variables are completely determined by the last $n-m$. 
For later development it will also be convenient to rewrite these linear equations modulo 2 as polynomial 
equations by changing variables from $\{0,1\}$ to $\{-1,1\}$:
\begin{align}
(1-2 x_1) = \prod_{i=m+1}^n (1-2C_{1i} x_i) (1-2 b_1) ,  \cdots, (1-2 x_m) = \prod_{i=m+1}^n (1- 2C_{mi} x_i) (1-2 b_m)
\label{eq:constrained_vars}
\end{align}

\section{Combining Random Projections with I-Projections}
\label{main}
\begin{figure}
\centering
\includegraphics[scale=0.4]{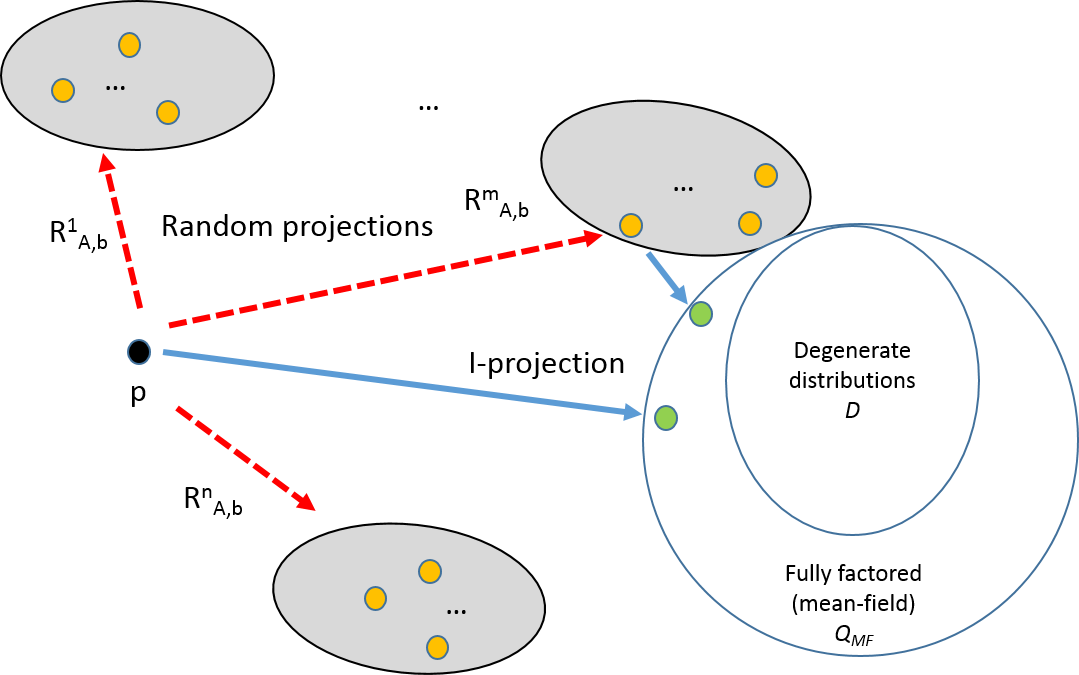}
\caption{Pictorial representation of the approach. $p$ is the target intractable distribution. Random 
projections implemented by universal hashing alter the partition function of the model in a predictable way. 
Random projections reduce the degrees of freedom of the model so that it becomes easier to 
approximate using tractable distributions.}
\label{fig:proj}
\end{figure}
Given an intractable target distribution $p$ and a candidate set of (tractable) distributions $\mathcal{Q}$, there are two main issues with variational approximation techniques: (i) $p$ can be far from the
approximating family $\mathcal{Q}$
in the sense that even the optimal $q^\ast = \arg\min_{q\in \mathcal{Q}}D_{KL}(q\|p)$ can have a large 
divergence $D_{KL}(q^\ast \|p)$ and therefore yield a poor lower bound in Eq. (\ref{varLB}), and (ii) the variational problem in Eq. (\ref{varLB}) is 
non-convex and thus difficult to solve exactly in high dimensions.
Our key idea is to address (i) by using the random projections introduced in the previous section 
to ``simplify'' $p$, producing a projection $R^m_{A,b}(p)$ that provably is closer to $\mathcal{Q}$. 
Crucially, because of the statistical properties of the random projection used,
(variational) inferences on the randomly projected model $R^m_{A,b}(p)$ reveal useful information about the original 
distribution $p$. Randomization plays a key role in our approach, boosting the power of variational inference. In fact, it is known that \#-P problems (e.g., computing the partition function) can be 
approximated in $\textrm{BPP}^{\textrm{NP}}$, i.e. in bounded error probabilistic polyonomial time 
by an algorithm that has access to a NP-oracle~\cite{stockmeyer1985approximation,valiant1986np,wishicml13}. 
Randomness appears to be crucial, and the ability to solve difficult (NP-equivalent) optimization 
problems, such as the one in in Eq. (\ref{varLB}), does not appear to be sufficient. By leveraging randomization, we are able to 
boost variational inference approaches (such as mean field), obtaining formal approximation 
guarantees for general target probability distributions $p$. A pictorial representation is given in Figure \ref{fig:proj}.

\subsection{Provably Tight Variational Bounds on the Partition Function}

Let $\mathcal{D} = \left\{ \delta x_0 | x_0\in X \right\}$ denote the set of degenerate 
probability distributions over $\{0,1\}^n$, i.e. probability distributions that 
place all the probability mass on a single configuration. There are $2^n$ such 
probability distributions and the entropy of each is zero. 
Given any probability distribution $p$, its projection on $\mathcal{D}$, i.e., $\arg \min_{q \in \mathcal{D}} D_{KL} (q||p)$, 
is given by a distribution that places all the probability on
$\arg \max_{x \in \mathcal{X}} \log p(x)$. Thus computing the I-projection 
on $\mathcal{D}$ is equivalent to solving a Most Probable Explanation 
query~\cite{koller2009probabilistic} which is NP-equivalent in the worst-case. 

Let $\mathcal{Q} \supseteq \mathcal{D}$ be a family of probability 
distributions that contains $\mathcal{D}$.
Our key result is that we can get \emph{provably tight bounds} on the partition function $Z$ by taking an I-projection onto $\mathcal{Q}$ \emph{after} a suitable random projection. This is formalized by the following theorem:

\begin{theorem}
\label{tightbounds}
Let $A^{i,t} \in \{0,1\}^{i \times n}\stackrel{iid}{\sim}\textnormal{Bernoulli}(\frac{1}{2})$ and $b^{i,t}\in \{0,1\}^i \stackrel{iid}{\sim}\textnormal{Bernoulli}(\frac{1}{2})$ for $i \in [0,n]$ and $t \in [1,T]$. 
Let $\mathcal{Q}$ be a family of distributions such that $\mathcal{D}\subseteq \mathcal{Q}$. Let 
\begin{equation}
\label{proj_var_inference_problem}
\gamma^{i,t} = \exp \left( \max_{q \in \mathcal{Q}} \theta\cdot \sum_{x: A^{i,t} x =b^{i,t}} q(x) \phi(x) - \sum_{x: A^{i,t}x =b^{i,t} } q(x) \log q(x) \right)
\end{equation}
be the optimal solutions for the projected variational inference problems.
Then for all $i \in [0,n]$ and for all $T\geq 1$ the rescaled variational solution 
is a lower bound for $Z$ in expectation
\[
\E\left[ \frac{1}{T} \sum_{t=1}^{T}  \gamma^{i,t} 2^i \right]=\E[\gamma^{i,t} 2^i ] \leq Z
\]
There also exists an $m$ such that for any $\delta > 0$ and positive constant $\alpha \leq 0.0042$, 
if $T \geq \frac{1}{\alpha} \left(\log (n /\delta)\right)$ then with 
probability at least $(1-2 \delta)$
\begin{eqnarray}
\label{mflb2}
\frac{1}{T} \sum_{t=1}^{T}  \gamma^{m,t} 2^m  \geq \frac{Z} {64 (n+1)}
\end{eqnarray}
\begin{eqnarray}
\label{mflb}
4 Z \geq Median\left(\gamma^{m,1}, \cdots,\gamma^{m,T} \right)  2^m 
\geq \frac{Z} {32 (n+1)}
\end{eqnarray}
\end{theorem}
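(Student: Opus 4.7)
The statement contains three assertions: an expectation bound, a high-probability lower bound on the empirical mean (\ref{mflb2}), and a two-sided median bound (\ref{mflb}). I would prove them in that order, using the variational lower bound (\ref{varLB}) and the hash identity $\E[Z(A,b)] = 2^{-i}Z$ from Section 2.2.

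For the expectation inequality, observe that for any fixed $(A^{i,t},b^{i,t})$ the quantity $\log\gamma^{i,t}$ is precisely the variational lower bound (\ref{varLB}) applied to the projected distribution $R^i_{A^{i,t},b^{i,t}}(p)$, whose support is $\{x : A^{i,t}x = b^{i,t}\}$; hence $\gamma^{i,t} \leq Z(A^{i,t},b^{i,t})$ deterministically. Multiplying by $2^i$ and taking expectation over the hash yields $\E[\gamma^{i,t}2^i] \leq 2^i\,\E[Z(A^{i,t},b^{i,t})] = Z$, and linearity transfers this to the sample mean.

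For the lower bound (\ref{mflb2}), write $w(x)=\prod_\alpha\psi_\alpha(\{x\}_\alpha)$ and order configurations so that $w(x^{(1)})\geq w(x^{(2)})\geq\cdots$. A standard geometric tail decomposition gives $Z \leq C\sum_{i=0}^{n}2^i w(x^{(2^i)})$ for an absolute $C$, so by pigeonhole there exists an index $m$ with $2^m w(x^{(2^m)}) \geq Z/(C(n+1))$; this is the $m$ the theorem asserts. Because $\mathcal{D}\subseteq\mathcal{Q}$, the degenerate distribution on any single satisfying configuration is feasible, so $\gamma^{m,t}\geq \max\{w(x) : A^{m,t}x=b^{m,t}\}$. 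The crux is then a Paley-Zygmund second-moment estimate on $Y=\sum_{j=1}^{2^m} I\{A^{m,t}x^{(j)}=b^{m,t}\}$: by the pairwise independence of the universal hashes recalled in Section 2.2, $\E[Y]=1$ and $\E[Y^2]\leq 2$, so with probability at least a concrete constant $\alpha$ at least one top-$2^m$ configuration survives the hash, forcing $\max\{w(x) : A^{m,t}x=b^{m,t}\}\geq w(x^{(2^m)})$. A Chernoff bound on the fraction of such ``good'' trials, using $T\geq \alpha^{-1}\log(n/\delta)$, then yields (\ref{mflb2}).

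For the median bound (\ref{mflb}), the lower inequality follows from the same constant-probability good event amplified via Chernoff so that more than half the trials exceed $Z/(32(n+1))$, whence the median does as well. For the upper inequality, Markov applied to the nonnegative $\gamma^{m,t}2^m$ with $\E[\gamma^{m,t}2^m]\leq Z$ gives $\P[\gamma^{m,t}2^m > 4Z]\leq 1/4$, and a Chernoff tail bound shows fewer than half the $T$ trials exceed $4Z$ except with probability $\delta$; hence the median is at most $4Z$. A union bound over the two sides delivers the stated $1-2\delta$ confidence. The main obstacle is the bookkeeping needed to extract the explicit constants $0.0042$, $32$, and $64$: one must choose the ``top bin'' to which Paley-Zygmund is applied (possibly with a small rank-buffer around $2^m$), then balance the resulting constant against those appearing in the two Chernoff calls and the Markov step. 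Once the arithmetic is made explicit the three claims fall out immediately; conceptually everything rests on (i) the per-realization variational inequality, (ii) the hash identity $\E[Z(A,b)]=2^{-m}Z$, and (iii) the pairwise independence of universal hashes.
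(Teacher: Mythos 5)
Your proposal is correct in outline and reaches all three claims, and for most of the skeleton it coincides with the paper: the first part (the per-realization inequality $\gamma^{i,t}\leq Z(A^{i,t},b^{i,t})$ from non-negativity of the KL divergence, combined with $\E[Z(A,b)]=2^{-i}Z$), the passage from median to mean (at least half of the nonnegative trials exceed the median, accounting for the factor $2$ between $32$ and $64$), and the upper bound in (\ref{mflb}) via Markov with $c=4$, a Chernoff bound on how many trials exceed $4Z$, and a union bound, are exactly the paper's steps. Where you genuinely diverge is the high-probability lower bound. The paper does not re-derive any concentration result: it invokes its Theorem~\ref{wishVariational}, a variational restatement of Theorem 1 of \cite{wishicml13} obtained from the identity $-\min_{q\in\mathcal{D}}D_{KL}(q\|R^i_{A,b}(p))+\log Z(A,b)=\max_{x:Ax=b}\theta\cdot\phi(x)$, then applies a pigeonhole to the $n+1$ nonnegative summands of (\ref{Zbounds}) to extract $m$, and uses $\mathcal{D}\subseteq\mathcal{Q}$ to pass from the MAP value to $\log\gamma^{m,t}$. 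You instead unpack that black box: order the weights, use the geometric decomposition $Z\leq C\sum_i 2^i w(x^{(2^i)})$, pigeonhole on the deterministic quantities $2^i w(x^{(2^i)})$, and run a pairwise-independence second-moment (Paley--Zygmund) argument per trial, amplified by Chernoff. Your route is more self-contained and makes $m$ a deterministic index (in the paper's proof $m$ is selected after conditioning on the event of Theorem~\ref{wishVariational}, so it is effectively random, which sits a little awkwardly with the ``there exists an $m$'' quantifier); the paper's route buys brevity and inherits the explicit constants $\alpha\leq 0.0042$ and $32$ from the cited result.

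One step in your sketch needs the care you yourself flag: Paley--Zygmund on exactly the top $2^m$ configurations gives per-trial success probability only $1/2$ (from $\E[Y]=1$, $\E[Y^2]\leq 2$), which is not enough for the median to concentrate via Chernoff---you need a strict gap above $1/2$. The standard fix is the rank buffer you mention (hash with $m$ constraints but track survivors among the top $2^{m+c}$, pushing the success probability to, e.g., $3/4$), but then the guaranteed weight is only $w(x^{(2^{m+c})})$, so the pigeonhole must be applied to the shifted sum $\sum_i 2^i w(x^{(2^{i+c})})$ (still a constant fraction of $Z$ up to the head terms, handled by a small case analysis). That bookkeeping is precisely where $0.0042$, $32$, and $64$ come from, and it is the content hidden inside the theorem the paper cites; your proposal identifies it correctly but leaves it unexecuted.
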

\begin{proof}
See Appendix.
\end{proof}
This proves that appropriately rescaled variational lower bounds obtained on the 
randomly projected models (aggregated through median or mean) 
are within a factor $n$ of the true value $Z$, where $n$ is the number of 
variables in the model. This is an improvement on prior 
variational approximations which can either be unboundedly suboptimal or provide
guarantees that hold only in expectation \cite{zhu2015hybrid}; in contrast, our 
bounds are tight and require a relatively small number of samples proportional 
to $\log n/\delta$.
The proof, reported in the appendix for space reasons, relies on the following 
technical result which can be seen as a variational interpretation of Theorem 1 
from~\cite{wishicml13} and is of independent interest:
\begin{theorem}
\label{wishVariational}
For any $\delta > 0$, and positive constant $\alpha \leq 0.0042$, let $T \geq \frac{1}{\alpha} \left(\log (n /\delta)\right)$. Let $A^{i,t} \in \{0,1\}^{i \times n}\stackrel{iid}{\sim}\textnormal{Bernoulli}(\frac{1}{2})$ and $b^{i,t}\in \{0,1\}^i \stackrel{iid}{\sim}\textnormal{Bernoulli}(\frac{1}{2})$ for $i \in [0,n]$ and $t \in [1,T]$. Let
\[
\delta^{i,t} = \min_{q \in \mathcal{D}} D_{KL} (q||R^i_{A^{i,t},b^{i,t}}(p))
\]
Then with probability at least $(1-\delta)$
\begin{align}
\label{Zbounds}
\sum_{i=0}^n \exp \left( Median\left(-\delta^{i,1}+ \log Z(A^{i,1},b^{i,1}), \cdots, - \delta^{i,T} + \log Z(A^{i,T},b^{i,T}) \right) \right) 2^{i-1}
\end{align}
is a 32-approximation to $Z$.
\end{theorem}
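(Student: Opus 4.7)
The plan is to reduce the claim to Theorem 1 of~\cite{wishicml13} (the WISH result) by observing that $\exp(-\delta^{i,t}+\log Z(A^{i,t},b^{i,t}))$ is exactly the MPE weight of the unnormalized model restricted to the random parity subspace $\{x : A^{i,t}x = b^{i,t}\bmod 2\}$.

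First, I would compute $\delta^{i,t}$ in closed form. Because every element of $\mathcal{D}$ is a point mass $\delta_{x_0}$, we have $D_{KL}(\delta_{x_0}\|R^i_{A,b}(p)) = -\log R^i_{A,b}(p)(x_0)$ when $x_0$ lies in the projected support and $+\infty$ otherwise. Substituting the definition $R^i_{A,b}(p)(x) = \prod_\alpha \psi_\alpha(\{x\}_\alpha)/Z(A,b)$ on the support, the minimizer over $\mathcal{D}$ selects the MPE configuration inside the projection, giving
\[
\delta^{i,t} \;=\; -\log M^{i,t} \;+\; \log Z(A^{i,t},b^{i,t}),
\qquad\text{where}\qquad
M^{i,t} \;=\; \max_{x:\,A^{i,t}x=b^{i,t}}\,\prod_\alpha \psi_\alpha(\{x\}_\alpha).
\]
Consequently $-\delta^{i,t} + \log Z(A^{i,t},b^{i,t}) = \log M^{i,t}$, identifying the quantity inside the median with the constrained log-MPE weight.

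Second, I would use that $\log$ is monotone so that $\mathrm{Median}$ commutes with it: $\exp(\mathrm{Median}(\log M^{i,1},\ldots,\log M^{i,T})) = \mathrm{Median}(M^{i,1},\ldots,M^{i,T})$. Therefore expression (\ref{Zbounds}) reduces to $\sum_{i=0}^{n} \mathrm{Median}(M^{i,1},\ldots,M^{i,T})\cdot 2^{i-1}$, which is precisely the WISH estimator: a weighted sum of medians of optimization answers under random XOR constraints of every dimension $i=0,\ldots,n$. I would then invoke Theorem 1 of~\cite{wishicml13} directly, verifying that the hypotheses line up: $A^{i,t}, b^{i,t}$ are uniform Bernoulli, which realize the required strongly universal hash family, and the sample complexity $T \geq \tfrac{1}{\alpha}\log(n/\delta)$ with $\alpha \leq 0.0042$ matches the WISH assumption, yielding a $32$-approximation with probability at least $1-\delta$.

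The main obstacle is essentially conceptual rather than technical: recognizing that the I-projection of the randomly projected model onto the degenerate family $\mathcal{D}$ is the (NP-equivalent) MPE problem on the XOR-constrained model, and that the additive $\log Z(A^{i,t},b^{i,t})$ shift inside the median is precisely what cancels the unknown projected normalizer to expose the raw MPE weight. The only minor care is in the boundary case where the projected support is empty: then $Z(A^{i,t},b^{i,t})=0$ and $\delta^{i,t}=+\infty$, and we consistently interpret $\exp(-\delta^{i,t}+\log Z(A^{i,t},b^{i,t}))$ as $M^{i,t}=0$, matching how the WISH estimator treats unsatisfiable parity constraints.
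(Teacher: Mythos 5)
Your proposal is correct and follows essentially the same route as the paper's own proof: you identify $-\delta^{i,t}+\log Z(A^{i,t},b^{i,t})$ with the constrained MPE log-weight $\max_{x:A^{i,t}x=b^{i,t}}\theta\cdot\phi(x)$ (your $\log M^{i,t}$, since $\prod_\alpha\psi_\alpha = \exp(\theta\cdot\phi(x))$), commute the monotone $\exp$ with the median, and invoke Theorem 1 of the WISH paper. Your explicit handling of the empty-support case is a small welcome addition beyond what the paper writes out.
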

The proof can be found in the appendix. Intuitively, Theorem \ref{wishVariational} states that one can always find a small number of
degenerate distributions (which can be equivalently thought of as special states that can be discovered through
random projections and KL-divergence minimization) that are with high probability representative of
the original probabilistic model, regardless of how complex the model is. Theorem \ref{tightbounds}
extends this idea to more general families of distributions such as Mean Field.

\subsection{Solving Randomly Projected Variational Inference Problems}
To apply the results from Theorem \ref{tightbounds} we must choose a tractable approximating family $\mathcal{D}\subseteq \mathcal{Q}$ for the I-projection part and integrate our random projections into the optimization procedure.
We focus on mean field ($\mathcal{Q}=\mathcal{Q}_{MF}$) as our approximating family, 
but the results can be easily extended to structured mean field 
\cite{bouchard2009optimization}. For simplicity of exposition we consider only probabilistic 
models $p$  with unary and binary factors (e.g. Ising models, restricted Boltzmann machines). 
That is, $p(x) = \exp(\theta\cdot \phi(x)) / Z$, where $\phi(x)$ are single node and pairwise 
edge indicator variables.

Recall that our projection $R^m_{A,b}(p)$ constrains the distribution $p$ to $\left\{ x|Ax=b\bmod 2
\right\}$.  The projected variational optimization problem (\ref{proj_var_inference_problem}) is therefore
\[
\log Z(A,b) \geq \max_{q}\ \  \theta \cdot \sum_{x|Ax=b\bmod 2} q(x) \phi(x) - \sum_{x|Ax=b\bmod 2} q(x) \log q(x) 
\]
Or, equivalently,
\begin{equation}
\label{objfunc_eq}
\log Z(A,b) \geq \max_{\mu}\ \  \theta \cdot \mu + \sum_{i=m+1}^n  H(\mu_i)
\end{equation}
where $\mu$ is the vector of singleton and pairwise marginals of $q(x)$ 
and $H(\mu_i)$  is the entropy of a Bernoulli random variable with parameter $\mu_i$. 
To solve this optimization problem efficiently
we need a clever way to take into account the parity constraints, for
running traditional mean field with message passing
as in \cite{zhu2015hybrid} would fail in the normalization step
because of the presence of hard parity constraints.
The key idea is to consider 
the equivalent row-reduced representation of the constraints from (\ref{eq:constrained_vars}) and define
\[
q(x_1, \cdots, x_n) = \prod_{i=m+1}^n q_i(x_i) \prod_{k=1}^m 1\left\{ (1-2 x_k) = \prod_{i=m+1}^n (1-2C_{ki} x_i) (1-2 b_k) \right\}
\]
where we have a set of independent ``free variables'' (wlog., the last $n-m$) and a set of ``constrained variables'' (the first $m$) that are always set as to satisfy the parity constraints. 
Since the variables 
$x_1,\ldots,x_m$ are fully
determined by $x_{m+1},\ldots,x_n$, we see that the marginals $\mu_1,\ldots,\mu_m$ are also
determined by $\mu_{m+1},\ldots,\mu_n$, as shown by the following proposition:
\begin{proposition}
\label{marginals_worked_out_prop}
The singleton and pairwise marginals in (\ref{objfunc_eq}) can be computed as follows:

Singleton marginals: for $k \in [m+1, n]$,  $\mu_k = E_q \left[ x_k \right] = q_k(1)$.
For $k \in [1, m]$,
\[
\mu_k  = \left(1-(1-2 b_k)\prod_{i=m+1}^n (1- 2C_{ki} \mu_i)  \right)/2
\]
Pairwise marginals: for $k,\ell \in [m+1,n]$, $\mu_{kl} = E_q [ x_k x_\ell] = \mu_k \mu_\ell$. For $k \in [m+1,n]$, $\ell \in [1,m]$
\[
 \mu_{kl} =  
\begin{cases}
    \mu_k \frac{1}{2} (1 + (1-2b_l)\prod_{i \neq k, i=m+1}^n (1 - 2C_{li}\mu_i)) & \text{if }
        C_{lk}=1 \\
        \mu_k \mu_l & \text{otherwise}
\end{cases}
\] 
For $k \in [1,m]$, $\ell \in [1,m]$
\[\begin{aligned}
\mu_{kl} = \frac{1}{4} (&1 + (1 - 2b_k)(1 - 2b_l)\prod_{i=m+1}^n (1 - \mu_i(2C_{ki} + 2C_{li} - 4C_{ki}C_{li}))\\ 
& -  (1-2b_k)\prod_{i=m+1}^n (1 - 2C_{ki}\mu_i)  -   (1-2b_l)\prod_{i=m+1}^n (1 - 2C_{li}\mu_i))
\end{aligned}\]
\end{proposition}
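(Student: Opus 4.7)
The plan is to exploit the $\{0,1\}\to\{-1,1\}$ substitution already set up in equation (\ref{eq:constrained_vars}), which turns the hard parity constraints into a single polynomial identity, and then use the assumed independence of the free variables $x_{m+1},\ldots,x_n$ under $q$. The central observation is that for each constrained index $k\in[1,m]$ one has $x_k = (1-s_k)/2$, where $s_k := 1-2x_k = (1-2b_k)\prod_{i=m+1}^n (1-2C_{ki}x_i)$. Thus every marginal reduces to an expectation of a product of factors $(1-2C_{ki}x_i)$ in the independent free variables; because $C_{ki}\in\{0,1\}$, each factor satisfies $\E_q[1-2C_{ki}x_i] = 1-2C_{ki}\mu_i$, and independence then factorizes the product.

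First I would handle the singleton case for $k\in[1,m]$ by writing $\mu_k = (1-\E_q[s_k])/2$, substituting the product form for $s_k$, and invoking independence to obtain the stated formula; the free case $k\in[m+1,n]$ is immediate from $\mu_k=q_k(1)$. For the mixed pairwise marginal with $k\in[m+1,n]$ and $\ell\in[1,m]$ I would expand $\mu_{k\ell} = (\mu_k - \E_q[x_k s_\ell])/2$ and split on $C_{\ell k}$. If $C_{\ell k}=0$ the variable $x_k$ does not appear in $s_\ell$, so independence gives $\E_q[x_k s_\ell] = \mu_k\E_q[s_\ell]$, which collapses via the singleton formula to $\mu_k\mu_\ell$. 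If $C_{\ell k}=1$, I would apply the scalar identity $x_k(1-2x_k) = -x_k$ (valid because $x_k^2=x_k$ on $\{0,1\}$) to pull $x_k$ out of $s_\ell$, leaving the product over the remaining indices $i\neq k$ independent of $x_k$; independence then yields the displayed expression.

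Finally, for $k,\ell\in[1,m]$ I would use the polarization identity $x_k x_\ell = \tfrac{1}{4}(1 - s_k - s_\ell + s_k s_\ell)$ and evaluate the four expectations separately. The two singleton terms $\E_q[s_k]$ and $\E_q[s_\ell]$ are already known, and for $\E_q[s_k s_\ell]$ the crucial algebraic identity is $(1-2C_{ki}x_i)(1-2C_{\ell i}x_i) = 1 - (2C_{ki}+2C_{\ell i}-4C_{ki}C_{\ell i})x_i$, which again follows from $x_i^2=x_i$; taking expectations under independence across $i\in[m+1,n]$ then produces the stated product. There is no conceptual obstacle here; the main friction is bookkeeping, particularly the need to apply $x_k(1-2x_k)=-x_k$ in the $C_{\ell k}=1$ subcase before invoking independence, and to track the $C_{ki},C_{\ell i}\in\{0,1\}$ combinations in the final expansion. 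Beyond that, the proposition is a mechanical unrolling of the definitions.
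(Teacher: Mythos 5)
Your proposal is correct and follows essentially the same route as the paper: change variables to $s_k = 1-2x_k$, use independence of the free variables under $q$ to factor expectations of products of $(1-2C_{ki}x_i)$, and use $x_i^2 = x_i$ to merge factors in the doubly-constrained case. The only cosmetic difference is in the mixed case with $C_{\ell k}=1$, where you use the identity $x_k(1-2x_k)=-x_k$ to absorb $x_k$ algebraically, whereas the paper argues probabilistically by summing over configurations and introducing an auxiliary $\pm1$ indicator variable $u$; both yield the same expression.
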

The derivation is found in the appendix. We can therefore maximize the lower bound in (\ref{objfunc_eq}) by optimizing only over the ``free marginals'' $\mu_{m+1},\ldots,\mu_n$, as the remaining one are completely determined per Proposition \ref{marginals_worked_out_prop}. Compared to a traditional mean field variational approximation, we have a problem with a smaller number of variables, but with additional non-convex constraints.

\section{Algorithm: Mean Field with Random Projections}
\label{algorithm}
Theorem \ref{tightbounds} guarantees that the approximation to $Z$ has a tight lower bound only if we 
are able to find globally optimal solutions for  (\ref{objfunc_eq}).
However, nontrivial variational inference problems (\ref{varLB}) are non-convex in general even without any
random projections and even when $\mathcal{Q}$ is simple, e.g., $\mathcal{Q}=\mathcal{Q}_{MF}$.
We do not explicitly handle this nonconvexity, but nevertheless we show empirically that 
we can vastly improve on mean field lower bounds.  The key insight for our optimization procedure is that the objective function is still coordinate-wise concave, like in a traditional mean-field approximation:
\begin{proposition}
\label{marginals_concave_prop}
The objective function $\theta \cdot \mu + \sum_{i=m+1}^n  H(\mu_i)$ in (\ref{objfunc_eq}) is concave with respect 
to any particular free marginal $\mu_{m+1},\ldots,\mu_n$.
\end{proposition}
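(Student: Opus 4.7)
The plan is to show that for any fixed free coordinate index $j \in [m+1,n]$, the linear term $\theta \cdot \mu$ becomes an \emph{affine} function of $\mu_j$ when the other free marginals $\{\mu_i : i \in [m+1,n], i \neq j\}$ are held fixed. Once that is established, the remainder of the objective decomposes as $H(\mu_j)$, which is strictly concave on $[0,1]$, plus $\sum_{i \in [m+1,n], i \neq j} H(\mu_i)$, which is a constant under this restriction. An affine function plus a concave function is concave, so the claim follows.

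To prove affineness in $\mu_j$, I would run through the cases of Proposition \ref{marginals_worked_out_prop}. The key observation is that in every product of the form $\prod_{i=m+1}^{n}(1 - 2C_{\cdot i}\mu_i)$ (or the analogous product in the pairwise $[1,m]\times[1,m]$ case), the variable $\mu_j$ appears in at most one factor, so each such product is an affine function of $\mu_j$. Concretely: free singletons $\mu_k$ for $k \in [m+1,n]\setminus\{j\}$ are constant and $\mu_j$ itself is trivially affine; constrained singletons $\mu_k$ for $k \in [1,m]$ are affine in $\mu_j$ directly from their formula; free--free pairwise marginals $\mu_{k\ell} = \mu_k \mu_\ell$ are constant if $k,\ell \neq j$ and affine if exactly one of them equals $j$; constrained--constrained pairwise marginals are affine because each of the three products in the formula is affine in $\mu_j$.

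The one place that deserves careful attention, and which I expect to be the main (very minor) obstacle, is the mixed pairwise case $k \in [m+1,n]$, $\ell \in [1,m]$ when $k = j$, because naively one might expect $\mu_j \cdot \mu_\ell(\mu_j)$ to produce a $\mu_j^2$ term. In the sub-case $C_{\ell j} = 1$ this is avoided because the formula uses the product $\prod_{i \neq k, i = m+1}^{n}(1 - 2C_{\ell i}\mu_i)$, which explicitly excludes $i = j$; hence $\mu_{j\ell}$ reduces to $\mu_j$ times a constant plus a constant. In the sub-case $C_{\ell j} = 0$, the formula gives $\mu_{j\ell} = \mu_j \mu_\ell$, but $\mu_\ell$ does not depend on $\mu_j$ at all, since the $\mu_j$ factor in its defining product is $(1 - 2 \cdot 0 \cdot \mu_j) = 1$. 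Either way $\mu_{j\ell}$ is affine in $\mu_j$. With all components of $\mu$ shown to be affine in $\mu_j$, $\theta \cdot \mu$ is an affine function of $\mu_j$, and adding the concave $H(\mu_j)$ (plus constants) yields a concave function, completing the argument.
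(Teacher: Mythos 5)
Your proposal is correct and follows essentially the same route as the paper's proof, which simply observes ``by inspection'' that every marginal in Proposition \ref{marginals_worked_out_prop} is linear (affine) in any single free marginal and then adds the concave entropy term. Your case-by-case verification, including the careful handling of the mixed pairwise case where the product $\prod_{i \neq k,\, i=m+1}^{n}(1-2C_{\ell i}\mu_i)$ excludes the coordinate in question, is just a fleshed-out version of that same inspection.
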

\begin{proof}
  By inspection, all the marginals in Proposition \ref{marginals_worked_out_prop} 
  are linear with respect to any specific 
  free marginal $\mu_{m+1},\ldots,\mu_n$. 
  Since the entropy term is concave, the RHS in (\ref{objfunc_eq})
  is concave in each free marginal $\mu_{m+1},\ldots,\mu_n$.
\end{proof}

Since (\ref{objfunc_eq}) is concave in each variable we devise a coordinate-wise ascent 
algorithm, called Mean Field with Random Projections (MFRP),
for maximizing the lower bound in (\ref{objfunc_eq}) over the free marginals defined
in Proposition \ref{marginals_worked_out_prop}. Starting from a random initialization, we iterate over each free marginal $\mu_k$ and maximize (\ref{objfunc_eq})
with the rest of the free marginals fixed by setting the gradient with respect to $\mu_k$ equal
to 0 and solving for $\mu_k$. The closed form expressions we use are reported in the appendix.
Because the overall optimization problem is not concave the algorithm may converge
at a local maximum; therefore, we use $J$ random initializations
and use the best lower bound found across the $J$ runs of the ascent algorithm. 
For a given $m$, we repeat this procedure $T$ times and return the median across the runs.
Each coordinate ascent step for free marginal $\mu_i$ takes $O(m+n+|E_{cc}|(n-m))$ steps 
in expectation where $E_{cc}$ is 
the number of variables co-occurring in a parity constraint.
Recomputing the constrained marginals takes $O(m(n-m))$ steps. 

The algorithm returns the maximum of MFRP$(p(x), m)$ over $m\in[0,n]$. If MFRP finds a global
optimum, then Theorem \ref{tightbounds} guarantees it is a tight lower bound for $\log Z$ with high
probability. Since MFRP uses coordinate-wise ascent we cannot certify global optimality; however,
our experiments show large improvements in the lower bound when compared to existing variational
methods.

\begin{algorithm}
\caption{MFRP$(p(x) \propto \prod_\alpha \psi_\alpha(\{x\}_\alpha),m)$ }
\label{algoGibbs}
\begin{algorithmic}
\For {$t=1, \cdots, T$}	\Comment{Do $T$ random projections}
    \State Generate parity bits $b^{(t)} \stackrel{iid}{\sim}\textnormal{Bernoulli}(\frac{1}{2})^m$ \Comment{Generate random projection $R^m_{A^{(t)},b^{(t)}}$}
    \State Generate matrix $A^{(t)}\stackrel{iid}{\sim}\textnormal{Bernoulli}(\frac{1}{2})^{m\times n}$
    \State Row reduce $A$,$b$ and permute to yield $C=[I | A']$ and $b$ \Comment{Compute constraints}
    \State $\widetilde{Z}^{(t)} \leftarrow 0$
    \For {$j=1, \cdots, J$}		
     \Comment{Try different initializations}				
        \State Initialize $\mu^{(j, t)}\stackrel{iid}{\sim}\textnormal{Unif}(0, 1)^n$
        \For {$l=1, \cdots, m$} 		
        \Comment{Compute constrained marginals}											
        \State $\mu^{(j, t)} \leftarrow \left(1-\prod_{i=m+1}^n (1- 2C_{li} \mu^{(j, t)}_i)
                (1-2 b_l) \right)/2$
        \EndFor
        \While {not converged} \Comment{Stop when the increment is small or timeout}
        \For {$k=m+1, \cdots, n$}
        \Comment{Coordinate ascent over free marginals}
        \State $\mu^{(j,t)}_k \leftarrow \arg\max_{\mu_k} \theta\cdot\mu^{(j, t)} + \sum_{i=m+1}^n H(\mu^{(j, t)}_i)$
            \For {$l=1, \cdots, m$} \Comment{Update constrained marginals}
                \State $\mu^{(j, t)}_l \leftarrow 
                \left(1-\prod_{i=m+1}^n (1- 2C_{li} \mu^{(j, t)}_i) (1-2 b_l) \right)/2$
            \EndFor
        \EndFor
        \EndWhile
    \EndFor
      \State $\widetilde{Z}^{(t)} \leftarrow \max_j
          \exp(\theta \cdot \mu^{(j, t)} + \sum_{i=m+1}^n  H(\mu^{(j, t)}_i))
          $
					\Comment{Pick best over initializations}
\EndFor
\State Return $ 2^m Median\left (  \widetilde{Z}^{(1)}, \ldots, \widetilde{Z}^{(T)}  \right) $ \Comment{Aggregate across projections}
\end{algorithmic}
\end{algorithm}

\section{Experiments }
\label{experiments}
We investigate MFRP's empirical performance on Ising models and on Restricted Boltzmann Machines.
In particular, we are interested in the log partition function estimates and in the quality of the
marginal estimates. Where applicable, exact ground truth estimates are obtained with the libDAI 
implementation of Junction Tree \cite{mooij2010libdai}. Upper bounds are calculated 
with Tree-Reweighted Belief Propagation (TRW-BP) \cite{wainwright2003tree},
also implemented in libDAI. All methods are compared to mean field (MF) optimized 
with coordinate-wise ascent and random restarts.

\subsection{Ising Models}

We consider $n\times n$ binary grid Ising models with variables $x_i\in \{-1,1\}$ and potentials
$\psi_{ij}(x_i,x_j)=\exp(w_{ij}x_i x_j + f_i x_i + f_j x_j)$. 
In particular, we look at mixed models where the $w_{ij}$'s are drawn uniformly from $[-10,10]$ and 
the $f_i$'s uniformly from $[-1,1]$. 

Figure \ref{fig:ising} compares the log partition function estimates from MF, Junction Tree, MFRP,
and TRW-BP. For each grid size, we generated
five different grids and computed the mean field estimate for each as a baseline lower bound. 
For each of the five grids we also computed the best MFRP lower bound over $m\in[0,20]$ 
with $T=5$ trials each. 
For comparison we include the exact log partition calculation from Junction Tree up to $n=20$ and the TRW-BP upper bounds for all $n$. 
We plot the mean and standard error bars of the log ratio of each estimate over mean field for each method over the five grids. 
Note that for large grid sizes, the lower bound provided by MFRP is hundreds of orders of magnitude
better than than those found by mean field.

Finally, we consider the empirical runtime of the method for varying grid sizes $n$ and number of
constraints $m$ in Figure \ref{fig:ising_runtime}. As expected, the runtime for mean field grows linearly
in the number of variables in the graph (quadratically with $n$) and there is a linear slowdown as
constraints are added to the optimization.

\begin{figure}[h]
    \centering
    \begin{minipage}[t]{.45\textwidth}
        \centering
        \includegraphics[width=\textwidth]{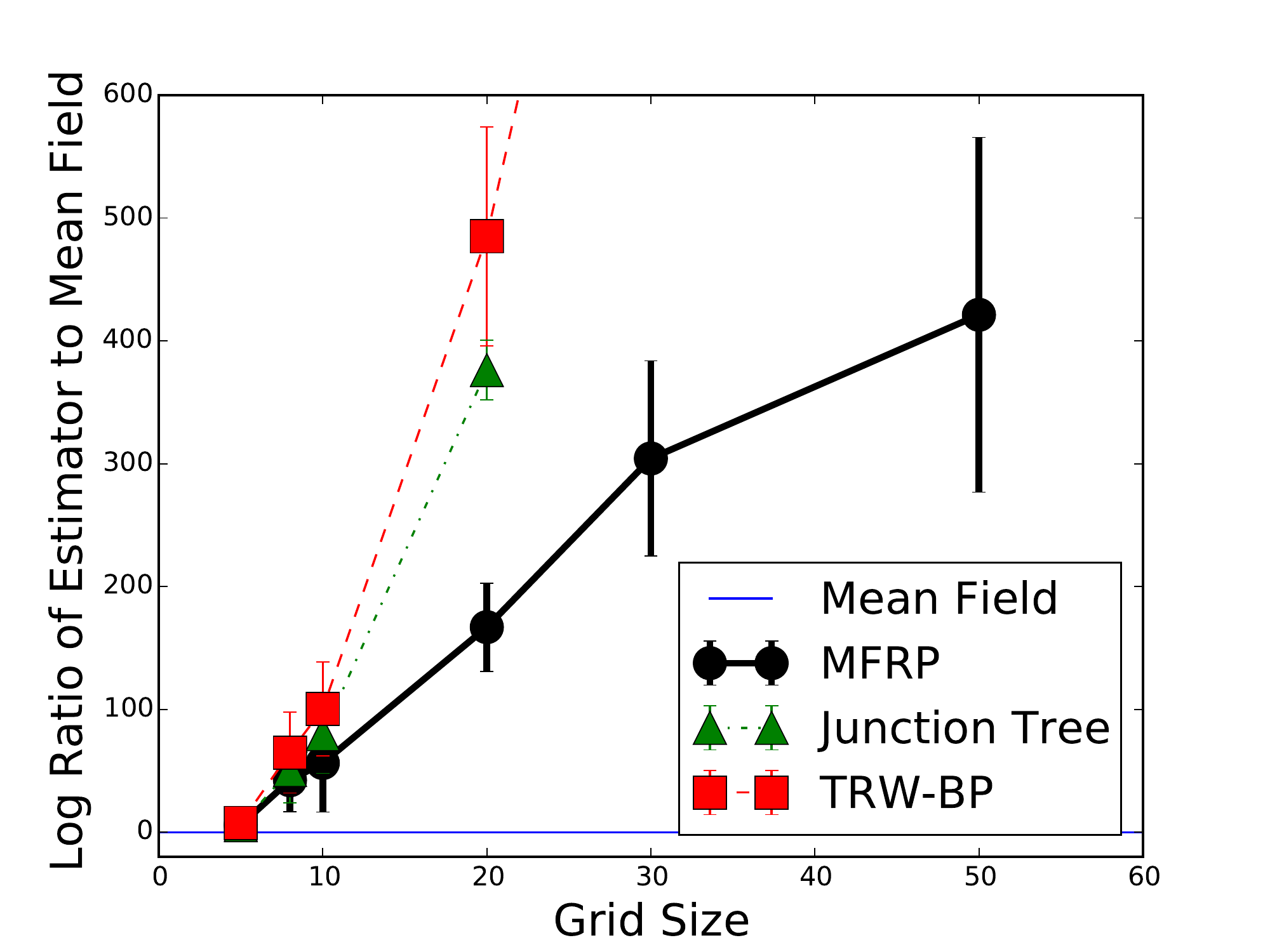}
        \caption{Ising grids: for each size, we plot the
            ratios of the estimates from each method to the mean field estimate, with standard error bars 
            based on 5 runs. Results are reported as a log ratio compared to mean field.}
        \label{fig:ising}
    \end{minipage}
    \hspace{.5cm}
    \begin{minipage}[t]{.45\textwidth}
        \centering
        \includegraphics[width=\textwidth]{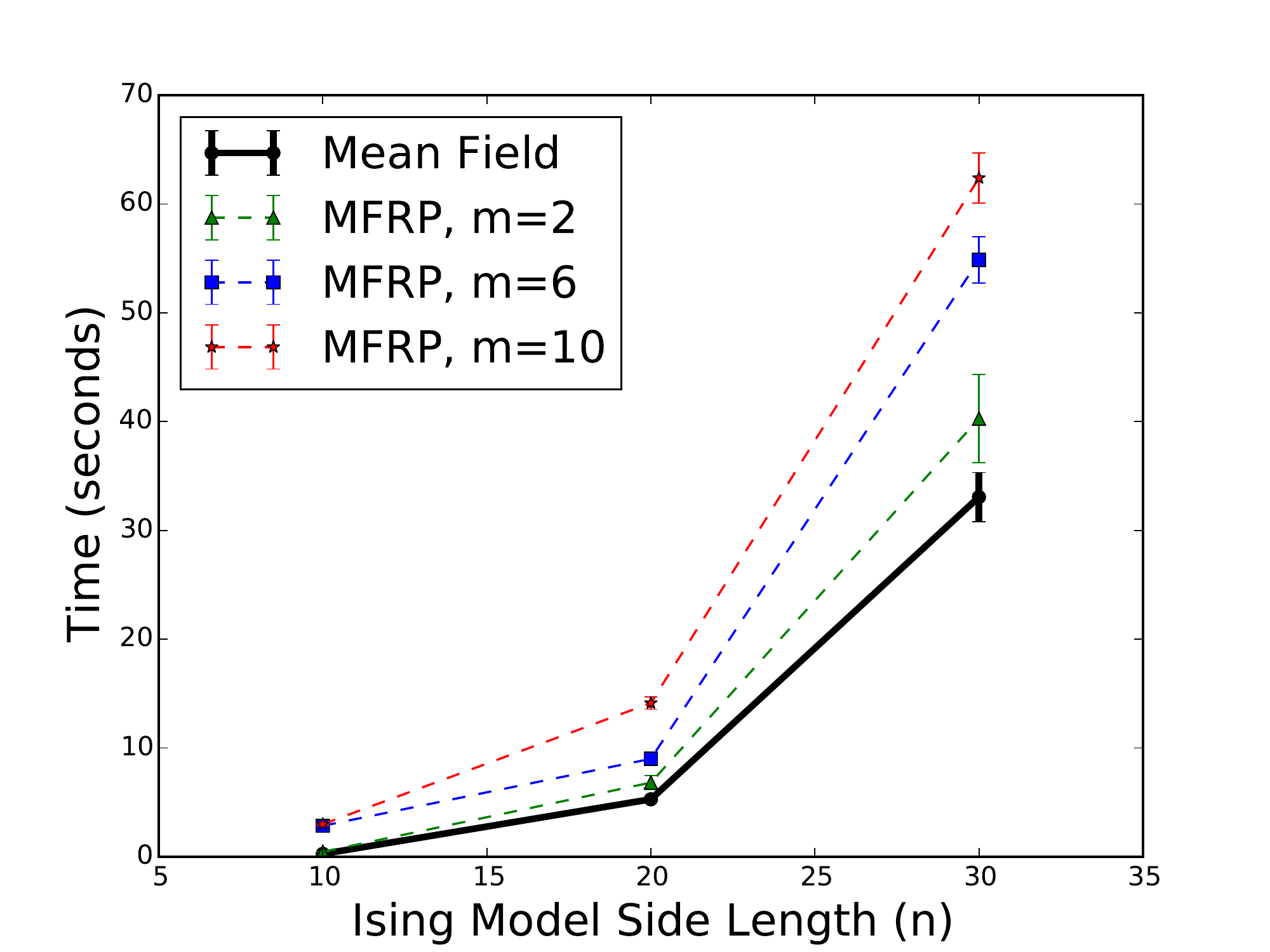}
        \caption{Runtimes for MFRP across different grid sizes and constraints. Both methods are
        linear in the number of variables in the model ($n^2$ in the Ising model case). }
        \label{fig:ising_runtime}
    \end{minipage}
\end{figure}

\subsection{Restricted Boltzmann Machines}

We train Restricted Boltzmann Machines (RBMs) \cite{hinton2006fast} using Contrastive Divergence
(CD)~\cite{welling2002new,carreira2005contrastive} on the MNIST hand-written digits dataset. 
In an RBM there is a layer of $n_h$ hidden binary variables $h=h_1, \cdots, h_{n_h}$ and a 
layer of $n_v$ binary visible units $v=v_1, \cdots, v_{n_v}$. The joint probability distribution is given by
$
P(h,v) = \frac{1}{Z} \exp(b'v+c'h+h'Wv)
$.
We use $n_h\in \{100, 200\}$ hidden units and $n_v=28\times 28=784$ visible units. 
We learn the parameters $b,c,W$ using CD-$k$ for $k \in \{1,5,15\}$, where $k$ 
denotes the number of Gibbs sampling steps used in the inference phase, 
with $15$ training epochs and minibatches of size $20$.

We then use MF and MFRP to estimate the log partition function and also consider the aggregate
marginals of the sub-problems. For most of the cases we see a clear improvement in both the log partition lower bounds and in the marginals,
with the marginal for No. Hidden Nodes $=100, k=15$ similar visually to an average over all digits in the MNIST dataset.

\begin{table}[htdp]
\begin{center}
\begin{tabular}{|c|c|c|c|c|c|c|}
\hline
No. Hidden Nodes & 100 & 100 & 100 & 200 & 200 & 200\\
\hline
CD-$k$ & 1 & 5 & 15 & 5 & 15 & 25\\
\hline
MF $\log Z$ & 501 & 348 & 297 & 203& 293 & 279\\
\hline
MFRP $\log Z$ & 501 & {\bf 433} & {\bf 342} & {\bf 380} & {\bf 313} & {\bf 295} \\
\hline
MF $\mu$ &  \includegraphics{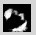} & \includegraphics{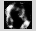} & \includegraphics{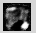} & \includegraphics{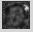} & \includegraphics{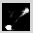} & \includegraphics{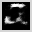} \\
\hline
 MFRP $\mu$ & \includegraphics{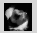}& \includegraphics{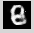}& \includegraphics{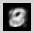}& \includegraphics{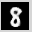}& \includegraphics{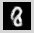}& \includegraphics{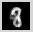}\\
 \hline
\end{tabular}
\end{center}
\caption{$\log Z$ and $\mu$ estimates across two RBMs trained with different sampling parameters.}
\label{table:rbm}
\end{table}

\section{Conclusions}
\label{conclusions}
We introduced a new, general approach to variational inference that combines random projections with I-projections to obtain provably tight lower
bounds for the log partition function. Our approach is the first to leverage universal hash functions and their properties in a variational sense.
We demonstrated the effectiveness of this idea by extending mean field with random projections and empirically showed a large 
improvement in the partition function lower bounds and marginals obtained on both synthetic and real world data.
Natural extensions to the approach include applications to other variational methods, like the Bethe approximation, and the use of more powerful global optimization techniques instead of the coordinate-wise ascent currently used.

\newpage

\bibliographystyle{unsrt}
\small
\bibliography{VariationalProjections}

\begin{thebibliography}{10}

\bibitem{koller2009probabilistic}
Daphne Koller and Nir Friedman.
\newblock {\em Probabilistic graphical models: principles and techniques}.
\newblock MIT Press, 2009.

\bibitem{murphy2012machine}
Kevin~P Murphy.
\newblock {\em Machine learning: a probabilistic perspective}.
\newblock MIT press, 2012.

\bibitem{bellman1961adaptive}
R.E. Bellman.
\newblock {\em Adaptive control processes: A guided tour}.
\newblock Princeton University Press, Princeton, NJ, 1961.

\bibitem{andrieu2003introduction}
Christophe Andrieu, Nando de~Freitas, Arnaud Doucet, and Michael~I Jordan.
\newblock An introduction to {MCMC} for machine learning.
\newblock {\em Machine learning}, 50(1-2):5--43, 2003.

\bibitem{wainwright2008graphical}
Martin~J. Wainwright and Michael~I. Jordan.
\newblock Graphical models, exponential families, and variational inference.
\newblock {\em Foundations and Trends in Machine Learning}, 1(1-2):1--305,
  2008.

\bibitem{gogate2011samplesearch}
V.~Gogate and R.~Dechter.
\newblock {SampleSearch}: Importance sampling in presence of determinism.
\newblock {\em Artificial Intelligence}, 175(2):694--729, 2011.

\bibitem{ermon2013embed}
Stefano Ermon, Carla~P. Gomes, Ashish Sabharwal, and Bart Selman.
\newblock Embed and project: Discrete sampling with universal hashing.
\newblock In {\em Advances in Neural Information Processing Systems (NIPS)},
  pages 2085--2093, 2013.

\bibitem{maddison2014sampling}
Chris~J Maddison, Daniel Tarlow, and Tom Minka.
\newblock A* sampling.
\newblock In {\em Advances in Neural Information Processing Systems}, pages
  3086--3094, 2014.

\bibitem{jerrum1997markov}
Mark Jerrum and Alistair Sinclair.
\newblock The markov chain monte carlo method: An approach to approximate
  counting and integration.
\newblock In {\em Approximation Algorithms for NP-hard Problems}, pages
  482--520. PWS Publishing, Boston, MA, 1997.

\bibitem{vadhan2011pseudorandomness}
S.~Vadhan.
\newblock Pseudorandomness.
\newblock {\em Foundations and Trends in Theoretical Computer Science}, 2011.

\bibitem{goldreich2001randomized}
O.~Goldreich.
\newblock Randomized methods in computation.
\newblock {\em Lecture Notes}, 2011.

\bibitem{uai13LPCount}
Stefano Ermon, Carla~P. Gomes, Ashish Sabharwal, and Bart Selman.
\newblock Optimization with parity constraints: From binary codes to discrete
  integration.
\newblock In {\em Proc. of the 29th Conference on Uncertainty in Artificial
  Intelligence (UAI)}, 2013.

\bibitem{valiant1979complexity}
L.G. Valiant.
\newblock The complexity of enumeration and reliability problems.
\newblock {\em SIAM Journal on Computing}, 8(3):410--421, 1979.

\bibitem{ermon2014low}
Stefano Ermon, Carla~P. Gomes, Ashish Sabharwal, and Bart Selman.
\newblock Low-density parity constraints for hashing-based discrete
  integration.
\newblock In {\em Proc. of the 31st International Conference on Machine
  Learning (ICML)}, pages 271--279, 2014.

\bibitem{stockmeyer1985approximation}
Larry Stockmeyer.
\newblock On approximation algorithms for \#{P}.
\newblock {\em SIAM Journal on Computing}, 14(4):849--861, 1985.

\bibitem{valiant1986np}
L.G. Valiant and V.V. Vazirani.
\newblock {NP} is as easy as detecting unique solutions.
\newblock {\em Theoretical Computer Science}, 47:85--93, 1986.

\bibitem{wishicml13}
Stefano Ermon, Carla~P. Gomes, Ashish Sabharwal, and Bart Selman.
\newblock Taming the curse of dimensionality: Discrete integration by hashing
  and optimization.
\newblock In {\em Proc. of the 30th International Conference on Machine
  Learning (ICML)}, 2013.

\bibitem{zhu2015hybrid}
Michael Zhu and Stefano Ermon.
\newblock A hybrid approach for probabilistic inference using random
  projections.
\newblock In {\em Proceedings of the 32nd International Conference on Machine
  Learning (ICML-15)}, pages 2039--2047, 2015.

\bibitem{bouchard2009optimization}
Alexandre Bouchard-C{\^o}t{\'e} and Michael~I Jordan.
\newblock Optimization of structured mean field objectives.
\newblock In {\em Proceedings of the Twenty-Fifth Conference on Uncertainty in
  Artificial Intelligence}, pages 67--74. AUAI Press, 2009.

\bibitem{mooij2010libdai}
J.M. Mooij.
\newblock {libDAI}: A free and open source {C++} library for discrete
  approximate inference in graphical models.
\newblock {\em Journal of Machine Learning Research}, 11:2169--2173, 2010.

\bibitem{wainwright2003tree}
Martin~J. Wainwright.
\newblock Tree-reweighted belief propagation algorithms and approximate {ML}
  estimation via pseudo-moment matching.
\newblock In {\em Proc. of the 8th International Workshop on Artificial
  Intelligence and Statistics (AISTATS)}, 2003.

\bibitem{hinton2006fast}
G.E. Hinton, S.~Osindero, and Y.W. Teh.
\newblock A fast learning algorithm for deep belief nets.
\newblock {\em Neural computation}, 18(7):1527--1554, 2006.

\bibitem{welling2002new}
M.~Welling and G.E. Hinton.
\newblock A new learning algorithm for mean field {B}oltzmann machines.
\newblock In {\em Proc. of the 12th International Conference on Artificial
  Neural Networks (ICANN)}, 2002.

\bibitem{carreira2005contrastive}
M.A. Carreira-Perpinan and G.E. Hinton.
\newblock On contrastive divergence learning.
\newblock In {\em Proc. of the 10th International Workshop on Artificial
  Intelligence and Statistics (AISTATS)}, page~17, 2005.

\end{thebibliography}
\clearpage
\newpage
\appendix
\section{Appendix : Proofs}
\begin{proof}[Proof of Theorem~\ref{wishVariational}]
\begin{align*}
\min_{q \in \mathcal{D}} D_{KL} (q||R^i_{A^{i,t},b^{i,t}}(p)) =\\
\min_{q \in \mathcal{D}} \sum_{x | A^{i,t}x =b^{i,t} \bmod 2} q(x) \log q(x) - \theta \cdot \sum_{x | A^{i,t}x =b^{i,t} \bmod 2} q(x) \phi(x) + \log Z(A^{i,t},b^{i,t}) = \\
\min_{q \in \mathcal{D}} - \theta \cdot \sum_{x | A^{i,t}x =b^{i,t} \bmod 2} q(x) \phi(x) + \log Z(A^{i,t},b^{i,t}) = \\
- \max_{x | A^{i,t}x =b^{i,t} \bmod 2} \theta \phi(x) + \log Z(A^{i,t},b^{i,t})
\end{align*}
Therefore
\[
- \min_{q \in \mathcal{D}} D_{KL} (q||R^i_{A^{i,t},b^{i,t}}(p)) + \log Z(A^{i,t},b^{i,t}) = \max_{x | A^{i,t}x =b^{i,t} \bmod 2} \theta \phi(x)
\]
Substituting into  Eq. (\ref{Zbounds}) we can rewrite as
\begin{align*}
\nonumber
\sum_{i=0}^n \exp \left( Median\left( \max_{x | A^{i,1}x =b^{i,1} \bmod 2} \theta \phi(x), \cdots, \max_{x | A^{i,T}x =b^{i,T} \bmod 2} \theta \phi(x) \right) \right) 2^{i-1} = \\
\sum_{i=0}^n   Median\left( \exp \left( \max_{x | A^{i,1}x =b^{i,1} \bmod 2} \theta \phi(x) \right), \cdots, \exp \left( \max_{x | A^{i,T}x =b^{i,T} \bmod 2} \theta \phi(x) \right) \right) 2^{i-1} = \\
\sum_{i=0}^n   Median\left( \max_{x | A^{i,1}x =b^{i,1} \bmod 2} \exp (\theta \phi(x)), \cdots, \max_{x | A^{i,T}x =b^{i,T} \bmod 2} \exp (\theta \phi(x)) \right)  2^{i-1}
\end{align*}
The result then follows directly from Theorem 1 from~\cite{wishicml13}.
\end{proof}

\begin{proof}[Proof of Theorem~\ref{tightbounds}]
For the first part, we have the following relationship on the expectation:
\[
\E[Z(A^{i,t},b^{i,t})] = 2^{-i} Z
\]
From the non-negativity of the KL divergence we have that for any $q \in \mathcal{Q}$
\[
\log Z(A^{i,t},b^{i,t}) \geq \theta \cdot \sum_{x: A^{i,t} x =b^{i,t}} q(x) \phi(x) - \sum_{x:A^{i,t} x =b^{i,t}} q(x) \log q(x) 
\]
\[
\log Z(A^{i,t},b^{i,t}) \geq \max_{q \in \mathcal{Q}} \left \{  \theta \cdot \sum_{x: A^{i,t} x =b^{i,t}} q(x) \phi(x) - \sum_{x: A^{i,t} x =b^{i,t}} q(x) \log q(x) \right \}
\]
\begin{equation}
\label{Zgammalb}
Z(A^{i,t},b^{i,t}) \geq \exp \left( \max_{q \in \mathcal{Q}} \left \{  \theta \cdot \sum_{x: A^{i,t}
x =b^{i,t}} q(x) \phi(x) - \sum_{x: A^{i,t} x =b^{i,t}} q(x) \log q(x) \right \} \right) \triangleq \gamma^{i,t}
\end{equation}
We take the expectation of both sides to yield
\[
\frac{Z}{2^i}= \E \left[ Z(A^{i,t},b) \right] \geq \E \left[ \exp \left( \max_{q \in \mathcal{Q}} \left \{  \theta \cdot \sum_{x:A^{i,t} x =b} q(x) \phi(x) - \sum_{x:A^{i,t} x =b} q(x) \log q(x) \right \} \right)\right] = \E[\gamma^{i,t} ]
\]
For the second part. Since the conditions of Theorem \ref{wishVariational} are satisfied, we know
that equation (\ref{Zbounds}) holds with probability at least $1-\delta$. From (\ref{Zbounds}),
since the terms in the sum are non-negative we have that the maximum element is at least $1/(n+1)$
of the sum:
\begin{align*}
\max_i \exp \left( Median\left( - \min_{q \in \mathcal{D}} D_{KL} (q||R^i_{A^{i,1},b^{i,1}}(p)) + \log Z(A^{i,1},b^{i,1}), \cdots,  \right. \right. \\
 \left. \left. - \min_{q \in \mathcal{D}} D_{KL} (q||R^i_{A^{i,T},b^{i,T}}(p)) + \log Z(A^{i,T},b^{i,T}) \right) \right) 2^{i-1} \geq \frac{1}{32} Z \frac{1}{n+1}
\end{align*}
Therefore there exists $m$ such that
\begin{align*}
Median\left( - \min_{q \in \mathcal{D}} D_{KL} (q||R^m_{A^{m,1},b^{m,1}}(p)) + \log Z(A^{m,1},b^{m,1}), \cdots,   \right. \\
 \left. - \min_{q \in \mathcal{D}} D_{KL} (q||R^m_{A^{m,T},b^{m,T}}(p)) + \log Z(A^{m,T},b^{m,T}) \right) + (m-1) \log 2 \geq -\log 32  + \log Z- \log (n+1)
\end{align*}
We also have
\[
\min_{q \in \mathcal{Q}} D_{KL} (q||R_{A,b}(p)) \leq \min_{q \in \mathcal{D}} D_{KL} (q||R_{A,b}(p))
\]
because $\mathcal{D} \subseteq \mathcal{Q}$. 
Thus
\begin{align*}
Median\left( - \min_{q \in \mathcal{Q}} D_{KL} (q||R^m_{A^{m,1},b^{m,1}}(p)) + \log Z(A^{m,1},b^{m,1}), \cdots,   \right. \\
 \left. - \min_{q \in \mathcal{Q}} D_{KL} (q||R^m_{A^{m,T},b^{m,T}}(p)) + \log Z(A^{m,T},b^{m,T}) \right) + (m-1) \log 2 \geq -\log 32  + \log Z- \log (n+1)
\end{align*}
From the definition of $D_{KL}$, we have
\[
D_{KL} (q||R^m_{A,b}(p)) = -\sum_{x: A x =b \bmod 2} q(x) \phi(x) + \sum_{x: A x =b \bmod 2} q(x) \log q(x) + \log Z(A,b)
\]
Pluggin in we get
\begin{eqnarray*}
Median\left( \log \gamma^{m,1} , \cdots,  \log \gamma^{m,T} \right) + (m-1) \log 2 \geq -\log 32 - \log (n+1) + \log Z
\end{eqnarray*}
and also
\begin{eqnarray*}
Median\left(\log  \gamma^{m,1} , \cdots, \log  \gamma^{m,T} \right) + m\log 2 \geq -\log 32 - \log (n+1) + \log Z
\end{eqnarray*}
with probability at least $1-\delta$.
\begin{eqnarray*}
Median\left(  \gamma^{m,1} , \cdots,  \gamma^{m,T} \right) 2^m \geq \frac{Z} {32 (n+1)}
\end{eqnarray*}
Since the terms are non zero,
\begin{eqnarray*}
\frac{1}{T} \sum_{t=1}^{T} \gamma^{m,t} \geq \frac{1}{2}  Median\left(  \gamma^{m,1} , \cdots,  \gamma^{m,T} \right)
\end{eqnarray*}
therefore with probability at least $1-\delta$
\begin{eqnarray*}
\frac{1}{T} \sum_{t=1}^{T} \gamma^{m,t} 2^m \geq \frac{Z} {64 (n+1)}
\end{eqnarray*}

From Markov's inequality
\[
\P \left[  Z(A^{i,t},b^{i,t}) \geq c \E[Z(A^{i,t},b^{i,t})]    \right] \leq \frac{1}{c}
\]
\[
\P \left[  Z(A^{i,t},b^{i,t}) 2^i \geq c Z    \right] \leq \frac{1}{c}
\]
Therefore since $Z(A^{i,t},b^{i,t}) \geq  \gamma^{i,t}$ from (\ref{Zgammalb}), setting $c=4$ and $i=m$ we get
\[
\P \left[  \gamma^{m,t} 2^m \geq 4 Z    \right] \leq \frac{1}{4}
\]
From Chernoff's inequality,
\[
\P \left[  4 Z \geq Median\left(\gamma^{m,1}, \cdots,\gamma^{m,T} \right)  2^m    \right] \geq 1-\delta
\]
and the claim follows from union bound.
\end{proof}

\begin{proof}[Proof of Proposition~\ref{marginals_worked_out_prop}]
For singleton marginals, when $k \in [m+1, n]$, $x_k$ is a free variable and thus $\mu_k = E_q[x_k] = q_k(1)$. When $k \in [1, m]$, 
\[
(1-2 x_k) = (1-2 b_k) \prod_{i=m+1}^n (1-2C_{ki} x_i) 
\]
Take the expectation on both side and since $x_i$ for $i \in [m+1, n]$ are free (independent) variables, we have 
\[
(1-2 \mu_k) =  (1-2 b_k)  \prod_{i=m+1}^n (1-2C_{ki} \mu_i)
\]
That is,
\[
\mu_k  = \left(1-(1-2 b_k)\prod_{i=m+1}^n (1- 2C_{ki} \mu_i)  \right)/2
\]
For the binary marginal $\mu_{kl}$, there are three cases: both $x_k, x_l$ are free variables; one is free and the other is constrained; both are constrained. For the first case, $k, \ell \in [m+1, n]$, they are independent and thus 
\[
\mu_{kl} = E_q [ x_k x_\ell] = \mu_k \mu_\ell
\]
For the second case, $k \in [m+1,n]$, $\ell \in [1,m]$. 
\[
\mu_{kl} = Pr[x_k=1, x_l=1] = \sum_{
\begin{array}{c}
x_{m+1}, ... , x_n, \\x_k = 1\\
 -1 = (1-2x_l) = (1-2b_l)\prod_{i=m+1}^n (1 - 2C_{li}x_i)
 \end{array}} \prod_{i=m+1}^n q_i(x_i)
\]
When $C_{lk} = 1, 1 - 2C_{lk}x_k = -1$, 
\[
\mu_{kl} =  \mu_k \cdot \sum_{
\begin{array}{c}
x_{m+1}, ...x_{k-1}, x_{k+1},... , x_n, \\
 1 = (1-2b_l)\prod_{i=m+1, i\neq k}^n (1 - 2C_{li}x_i)
 \end{array}} \prod_{i=m+1, i\neq k}^n q_i(x_i)
\]
Let's introduce a new binary variable, u, satisfying the constraint 
\[
(2u-1) = (1-2b_l)\prod_{i \neq k, i=m+1}^n (1 - 2C_{li}x_i)
\]
Then the above summation is over $x_{m+1}, ... x_{k-1}, x_{k+1}, ..., x_n, u$ such that $u = 1$. 
The probability of u being 1 is the expected value of u. Therefore, 
\[
\mu_{kl} = \mu_kE[u] = \mu_k \frac{1}{2} (1 + (1-2b_l)\prod_{i \neq k, i=m+1}^n (1 - 2C_{li}\mu_i))
\]
When $C_{lk} = 0, x_l$ is independent of $x_k$, so $\mu_{kl} = \mu_k\mu_l$\\
For the last case, $k, \ell \in [1,m]$. 
\[
(1-2 x_k) (1-2x_\ell) = (1-2 b_k) (1-2 b_\ell) \prod_{i=m+1}^n (1-2C_{ki} x_i)  \prod_{i=m+1}^n (1-2C_{\ell i} x_i)
\]
Taking the expected value of both side
\[
1 - 2\mu_l - 2\mu_k + 4\mu_{kl} = (1-2b_l)(1-2b_k)\prod_{i=m+1}^n E[1 - x_i(2C_{ki} + 2C_{li}) + 4C_{ki}C_{li}x_i^2]
\]
\[
\mu_{kl} = \frac{1}{4} (-1 + 2\mu_k + 2\mu_l + (1 - 2b_k)(1 - 2b_l)\prod_{i=m+1}^n (1 - \mu_i(2C_{ki} + 2C_{li} - 4C_{ki}C_{li})))
\]
Plug in the result of $\mu_k, \mu_l$:
\[
\begin{aligned}
\mu_{kl} = \frac{1}{4} (&1 + (1 - 2b_k)(1 - 2b_l)\prod_{i=m+1}^n (1 - \mu_i(2C_{ki} + 2C_{li} - 4C_{ki}C_{li}))\\ 
& -  (1-2b_k)\prod_{i=m+1}^n (1 - 2C_{ki}\mu_i)  -   (1-2b_l)\prod_{i=m+1}^n (1 - 2C_{li}\mu_i))
\end{aligned}
\]

\end{proof}

\begin{proposition}[The gradients for coordinate ascent]
\label{gradient_prop}

Assuming we are taking the gradient with respect to $\mu_k$, where $k \in [m+1, n]$.  \\
1. Unary term $\mu_k$  
\[
\frac{\partial \mu_k}{\partial \mu_k} = 1
\]
And thus 
\[
\frac{\partial H(\mu_k)}{\partial \mu_k} = \frac{\partial}{\partial \mu_k} -(\mu_k \log(\mu_k) + (1 - \mu_k) \log(1 - \mu_k)) = \log(\frac{1 - \mu_k}{\mu_k})
\]
2. Unary term $\mu_l$, $l \geq m+1, l \neq k$
\[
\frac{\partial \mu_l}{\partial \mu_k} = 0
\]
\[
\frac{\partial H(\mu_l)}{\partial \mu_k} = 0
\]
3. Unary term $\mu_l$, $l \leq m$
\[
\frac{\partial \mu_l}{\partial \mu_k} = \frac{\partial}{\partial \mu_k} \frac{1}{2} (1 - (1 - 2b_l) \prod_{i=m+1}^n (1 - 2C_{li}\mu_i)) = 
(1 - 2b_l)C_{lk} \prod_{i=m+1, i \neq k}^n (1 - 2C_{li}\mu_i)
\]

4. Binary term, $\mu_{kl}, l \geq m+1$
\[
\frac{\partial}{\partial \mu_k} \mu_{kl} = \mu_l
\]
5. Binary term, $\mu_{pl}$, both $p, l \geq m+1, p \neq k, l \neq k$
\[
\frac{\partial}{\partial \mu_k} \mu_{pl} = 0
\]
6. Binary term, $\mu_{kl}, l \leq m$\\
When $C_{lk} = 0$, $\mu_{kl} = \mu_{k}\mu_l$ and its derivative is
\[
\mu_l + \mu_k\mu_l' = \mu_l = \frac{1}{2} (1 -  (1-2b_l)\prod_{i=m+1, i\neq k}^n (1 - 2C_{li}\mu_i))
\]
When $C_{lk} = 1$, $\mu_{kl} =  \mu_k \frac{1}{2} (1 + (1-2b_l)\prod_{i \neq k, i=m+1}^n (1 - 2C_{li}\mu_i))$. The derivative is 
\[
 \frac{1}{2} (1 + (1-2b_l)\prod_{i \neq k, i=m+1}^n (1 - 2C_{li}\mu_i))  
 \]
7. Binary term, $\mu_{pl}$, where $p \geq m+1, p \neq k, l \leq m$\\
When $C_{lp} = 0$, $\mu_{pl} = \mu_{p}\mu_l$ and its derivative is
\[
\mu_p \mu_l'
\]
When $C_{lp} = 1$, $\mu_{pl} =  \mu_p \frac{1}{2} (1 + (1-2b_l)\prod_{i \neq p, i=m+1}^n (1 - 2C_{li}\mu_i))$. The derivative is 
\[
 -\mu_p C_{kl} (1-2b_l)\prod_{i \neq k, i\neq p, i=m+1}^n (1 - 2C_{li}\mu_i))  
 \]
8 Binary term $\mu_{pl}$, where both $p, l \leq m$
\[\begin{aligned}
\frac{\partial}{\partial \mu_k} \mu_{pl} = \frac{\partial}{\partial \mu_k} \frac{1}{4} (1 &-  (1 - 2b_p)\prod_{i=m+1}^n(1 - 2C_{pi}\mu_i)\\
&-  (1 - 2b_l)\prod_{i=m+1}^n(1 - 2C_{li}\mu_i) \\
&+  (1 - 2b_p)(1 - 2b_l)\prod_{i=m+1}^n(1 - (2C_{pi} + 2C_{li} - 4c_{pi}C_{li})\mu_i))
\end{aligned}\]
\[\begin{aligned}
\frac{\partial}{\partial \mu_k} \mu_{pl} = &\frac{(1 - 2b_p)C_{pk}}{2} \prod_{i=m+1, i\neq k}^n (1 - 2C_{pi}\mu_i) \ \ \ ( = \mu_p'/2)\\
&+  \frac{(1 - 2b_l)C_{lk}}{2} \prod_{i=m+1, i\neq k}^n (1 - 2C_{pi}\mu_i) \ \ \ ( = \mu_l'/2)\\
&- \frac{(1-2b_p)(1 - 2b_l)(2C_{pk} + 2C_{lk} - 4C_{pk}C_{lk})}{4} \prod_{i=m+1, i \neq k}^n (1 - \mu_i(2C_{pi} + 2C_{li} - 4C_{pi}C_{li}))
\end{aligned}\]

All gradients except the entropy one are independent of $\mu_k$, so whole gradient can be expressed as 
\[
c +  \log(\frac{1 - \mu_k}{\mu_k}),
\]
where c is a constant with respect to $\mu_k$. Therefore, the coordinate ascent step for $\mu_k$ is to set it to 
\[
\frac{1}{1 + \exp(-c)}
\]
\end{proposition}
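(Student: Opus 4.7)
The plan is to derive each of the eight case-by-case gradient expressions by direct differentiation of the closed-form formulas for $\mu_l$ and $\mu_{pl}$ established in Proposition~\ref{marginals_worked_out_prop}, and then assemble them into a single gradient whose $\mu_k$-dependence is isolated in the entropy term. The conceptual lever is Proposition~\ref{marginals_concave_prop}: every dependent marginal is affine in any single free marginal, so each of the eight contributions collapses to a constant coefficient once $\mu_k$ is singled out and the remainder of the product is treated as fixed.

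First I would note that the entropy sum $\sum_{i=m+1}^n H(\mu_i)$ ranges only over free marginals, which are mutually independent parameters, so its $\mu_k$-derivative is simply $\log\!\bigl((1-\mu_k)/\mu_k\bigr)$. For the energy term $\theta\cdot\mu$ I would partition the contributions into the three structural regimes identified in Proposition~\ref{marginals_worked_out_prop}: free-free, free-constrained, and constrained-constrained. In each case, the closed-form marginal is a product of the form $\prod_{i=m+1}^n(1-2C_{\bullet i}\mu_i)$, possibly multiplied by another such product. Differentiating with respect to $\mu_k$ picks out the factor with $i=k$, yielding the stated coefficient $-2C_{\bullet k}$ times the remaining product, and gives $0$ whenever $C_{\bullet k}=0$ (i.e., whenever $x_k$ does not appear in the relevant parity constraint). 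The main obstacle is the constraint-constraint case $\mu_{pl}$ with $p,l\le m$: its formula has three separate product terms and the combined exponent $2C_{pi}+2C_{li}-4C_{pi}C_{li}$ (arising from $x_i^2=x_i$ when expanding $(1-2C_{pi}x_i)(1-2C_{li}x_i)$), so one must carefully apply the product rule to each of the three pieces and verify that the resulting derivative is consistent with the known linearity in $\mu_k$ and with the two simpler cases obtained by holding $\mu_p,\mu_l$ fixed versus letting them vary via $\mu_k$.

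Collecting every case, every contribution coming from $\theta\cdot\mu$ is a constant in $\mu_k$, so the coordinate gradient has the form $c + \log\!\bigl((1-\mu_k)/\mu_k\bigr)$ for a $\mu_k$-independent constant $c$ that aggregates all the energy-term coefficients. Setting this to zero gives $\log\!\bigl((1-\mu_k)/\mu_k\bigr)=-c$, which solves to the logistic update $\mu_k=1/(1+\exp(-c))$ stated at the end of the proposition. The overall proof is almost entirely bookkeeping; no inequality or probabilistic tool is required beyond the structural facts already established in Propositions~\ref{marginals_worked_out_prop} and~\ref{marginals_concave_prop}, and the real content is ensuring that the product-rule expansions in the constraint-constraint cases are organized correctly so that all $\mu_k$-dependence outside the single entropy term cancels.
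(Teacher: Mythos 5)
Your proposal is correct and follows essentially the same route as the paper: the proposition is proved there by exactly this direct, case-by-case differentiation of the closed-form marginals from Proposition~\ref{marginals_worked_out_prop}, using their affineness in each free marginal (the fact behind Proposition~\ref{marginals_concave_prop}) so that all energy-term contributions are constants in $\mu_k$ and only the entropy term $\log\bigl((1-\mu_k)/\mu_k\bigr)$ carries the $\mu_k$-dependence, giving the logistic update $\mu_k = 1/(1+\exp(-c))$.
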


\end{document}